\documentclass[conference]{IEEEtran}
\IEEEoverridecommandlockouts
\usepackage{cite}
\usepackage{amsmath,amssymb,amsfonts,amsthm}
\usepackage{algorithmic}
\usepackage{booktabs}
\usepackage{graphicx}
\usepackage{textcomp}
\usepackage{xcolor}
\def\BibTeX{{\rm B\kern-.05em{\sc i\kern-.025em b}\kern-.08em
    T\kern-.1667em\lower.7ex\hbox{E}\kern-.125emX}}
\begin{document}

\title{How Much Regularization Survives Averaging? Update Masking in Federated Learning\\

\thanks{\textsuperscript{*}Corresponding author: Zhenke Chen (chanzzkk@gmail.com).}
}

\author{\IEEEauthorblockN{1\textsuperscript{st} Wenhao Yan}
\IEEEauthorblockA{\textit{Faculty of Science \& Technology} \\
\textit{Sophia University}\\
Tokyo, Japan \\
wh\_yan@eagle.sophia.ac.jp
}
\and
\IEEEauthorblockN{2\textsuperscript{nd} Fu Kuroda}
\IEEEauthorblockA{\textit{Faculty of Science \& Technology} \\
\textit{Sophia University}\\
Tokyo, Japan \\
f-kuroda-5a8@eagle.sophia.ac.jp	
}
\and
\IEEEauthorblockN{3\textsuperscript{rd} Yucheng Jin}
\IEEEauthorblockA{\textit{Technical R\&D Department} \\
\textit{Shendian Energy Co., Ltd.}\\
Lishui, China \\
15010330370@163.com
}
\and
\IEEEauthorblockN{4\textsuperscript{th} Zhenke Chen\textsuperscript{*}}
\IEEEauthorblockA{\textit{Technical R\&D Department} \\
\textit{Shendian Energy Co., Ltd.}\\
Lishui, China \\
chanzzkk@gmail.com
}
}

\newtheorem{proposition}{Proposition}
\newtheorem{corollary}{Corollary}

\maketitle

\begin{abstract}
Federated learning on non-IID data seeks flat minima to generalize across clients, and existing methods borrow sharpness-aware minimization from centralized training. There is a second way to reach flat minima, in which the regularization comes for free from noise added to the parameter updates, and it has never been carried over to the federated setting as an implicit regularizer. We show the reason. Masking charges the optimizer for moving in sharp directions. We prove that when each client draws its own mask, federated averaging weakens that charge by exactly the cohort size, and that giving every client the same mask brings it back by a factor equal to the inverse gradient diversity of the cohort. In our experiment setting on CIFAR-10, that factor is $1.19$ out of a possible $10$. Turning off minibatch sampling raises it to $8.96$, while changing data heterogeneity a thousandfold leaves it between $1.17$ and $1.50$. The configurations keeping the regularization train far too poorly to use.
\end{abstract}

\begin{IEEEkeywords}
federated learning, gradient diversity, implicit regularization, update masking
\end{IEEEkeywords}

\section{Introduction}
Federated learning (FL) trains a shared model across clients that cannot pool their data \cite{mcmahan2017communication}. When those clients hold non-identically distributed data, the aggregated model is required to exhibit generalization performance across distributions it never sees jointly. Consequently, extensive research has pursued flat minima as the way to facilitate that generalization. Notable works along this direction include FedSAM \cite{qu2022generalized}, FedSMOO \cite{sun2023dynamic}, and FedGAMMA \cite{dai2023fedgamma}, importing sharpness-aware minimization from centralized training.

However, centralized training also offers a second route to flat minima, which has not been taken in FL as an implicit regularizer so far. Rather than explicitly perturbing the loss function, this kind of approaches operates directly on the update dynamics, including dropout, additive gradient noise, and most recently block-wise update masking, which discards entire parameter blocks randomly and rescales the survivors to stay unbiased, all of which induce an implicit curvature penalty through the variance of the perturbation. Masking is the sharpest instance, since discarding half of each update vector consistently outperforms most advanced dense optimizers in large language model (LLM) pre-training \cite{joo2026surprising}. Dropout and additive noise do appear in FL, as a means of reducing client compute \cite{wen2022federated} and as a privacy mechanism \cite{cheng2022differentially} respectively, but not for the curvature penalty they induce. In this paper, we find that this phenomenon is not an oversight.

Our contributions are summarized as follows.
\begin{enumerate}
    \item We show that under FedAvg with independent per-client masks, the curvature penalty induced by masking is attenuated by exactly the cohort size, and we derive the resulting penalty.
    \item Synchronizing the mask across clients restores the penalty, by a factor equal to the reciprocal of the cohort's gradient diversity. That factor reaches the cohort size at peak, and drops below one when client updates cancel, then synchronizing hurts the model performance.
    \item Measuring that factor on CIFAR-10 with FedAvg, we find it near its lower bound. Minibatch sampling noise is what actually holds it there. Data heterogeneity does not.
\end{enumerate}

\section{Preliminaries}

\subsection{Notation and Federated Averaging}

The model parameters $\theta$ are partitioned into $B$ disjoint blocks $\left\{ \theta ^{(b)} \right\}_{b=1}^B$, typically one per parameter tensor. $H_{bb}$ denotes the corresponding diagonal block of the Hessian, and $\bar{H}$ denotes the Hessian of the global objective. FedAvg proceeds in rounds. In each round, the server broadcasts $\theta$ to a cohort $C$ of $K_c$ clients randomly sampled from the whole population, each client runs $\tau$ local epochs, then the server computes the average of the resulting local updates

\begin{equation}
    \Delta_k \triangleq \theta -\theta_k,  \bar{\Delta} =\frac{1}{K_c} \sum_{k\in C}^{}\Delta_k ,
    \label{eq1}
\end{equation}

\noindent where $\theta_k$ denotes the parameters of the client $k$ after local training. We use $\tau$ instead of $E$ for the number of local epochs since $E$ denotes the expectation throughout. Note that $\tau$ counts local epochs here, not local SGD steps as in some related prior work.

\subsection{Update Masking}

Given a block-partitioned update $\Delta$ from any dense optimizer, block-wise masking replaces it with

\begin{equation}
    \tilde{\Delta}^{(b)}=\frac{1}{p} m^{(b)}\Delta^{(b)},m^{(b)}\sim Bernoulli(p),
    \label{eq2}
\end{equation}

\noindent which is drawn independently across blocks. The factor $1/p$ is the key to make the masked update unbiased, and $\mathbb{E}\left[\tilde{\Delta}^{(b)}\right] =\Delta^{(b)}$. It is that variance which produces the curvature penalty. The masking granularity fundamentally shapes the surviving curvature structure, as independent block‑wise sampling decouples coordinates and a finer partitioning consequently retains fewer cross-coordinate terms. We follow prior work in masking whole parameter tensors.

\subsection{Connection with Gradient Diversity}

For a cohort of updates $\left\{\Delta_k \right\}_{k\in C}$, we use $\Delta_S$ to denote the gradient diversity \cite{yin2018gradient} as

\begin{equation}
    \Delta_S \triangleq \frac{ {\textstyle\sum_{k}^{}\left\|\Delta_k\right\|^2}}{\left\|{\textstyle\sum_{k}^{}}\Delta_k\right\|^2}.
    \label{eq3}
\end{equation}

Throughout we work with its reciprocal $1/\Delta_S$, which measures how much individual effort of clients survives aggregation. It equals $K_c$ when the cohort moves as one, and falls below one as the updates conflict.

By the Cauchy-Schwarz inequality, we get $\Delta_S \ge 1/K_c$, with equality if and only if all $\Delta_k$ are equal. Equivalently, $1/\Delta_S \leq K_c$. Neither quantity is bounded on the other side. We have $\Delta_S>1$ and $1/\Delta_S<1$ whenever $\sum_{i<j}^{}\left\langle \Delta_i,\Delta_j\right\rangle<0$, which stands for whenever the cohort's updates point in conflicting directions and largely cancel in the average. 

FedExP explicitly connects its adaptive server step size to gradient diversity, though only in the special case of a single full-batch local step. The connection is in fact exact for any number of local steps, their step size rearranges to $(K_c/2)\cdot \Delta_S$, so the server takes larger steps precisely when the cohort disagrees \cite{jhunjhunwala2023fedexp}.

\section{Analysis}

The masking penalty was established in the centralized setting, for a single worker masking an individual optimizer step. We restate it here, then derive what becomes of it when many clients mask their round-level updates and the server averages the result.

Throughout, we let

\begin{equation}
    R(\Delta) \triangleq \frac{1-p}{2p}\sum_{b=1}^{B}{(\Delta^{(b)})}^\top H_{bb}\Delta^{(b)}
    \label{eq4}
\end{equation}

\noindent denote the curvature penalty induced by masking an update $\Delta$, where $H$ is the Hessian at the point being evaluated.

\begin{proposition}
\label{prop:mask-curvature}
Let $\Delta_t=(\Delta_t^{(1)},\dots,\Delta_t^{(B)})$ be a block-partitioned update from a base optimizer that is masked as in \eqref{eq2}, then conditioned on the state at step $t$,

\begin{equation}
    \mathbb{E}\left[l(\theta_t-\tilde{\Delta}_t)\right] = l(\theta_t-\Delta_t) + R(\Delta_t) +O(\left\|\Delta_t\right\|^3)
    \label{eq:prop1}
\end{equation}

\end{proposition}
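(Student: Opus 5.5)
We expand $l$ to second order around the dense iterate $\theta_t-\Delta_t$ rather than around $\theta_t$, so the zeroth-order term is exactly $l(\theta_t-\Delta_t)$. Write the masking error as $\varepsilon_t \triangleq \tilde{\Delta}_t-\Delta_t$, so that $\theta_t-\tilde{\Delta}_t=(\theta_t-\Delta_t)-\varepsilon_t$. Blockwise, $\varepsilon_t^{(b)}=(m^{(b)}/p-1)\Delta_t^{(b)}$. Conditioned on the state at step $t$, $\Delta_t$ is fixed and the only randomness is the mask. Taylor's theorem then gives
\begin{equation*}
l(\theta_t-\tilde{\Delta}_t)=l(\theta_t-\Delta_t)-\nabla l(\theta_t-\Delta_t)^\top\varepsilon_t+\tfrac12\varepsilon_t^\top H'\varepsilon_t+O(\|\varepsilon_t\|^3),
\end{equation*}
where $H'$ is the Hessian at $\theta_t-\Delta_t$. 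We then take the expectation over the mask term by term.

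\textbf{First-order term.} It vanishes because the masked update is unbiased: $\mathbb{E}[m^{(b)}/p-1]=0$, so $\mathbb{E}[\varepsilon_t]=0$.

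\textbf{Quadratic term.} Expand $\varepsilon_t^\top H'\varepsilon_t=\sum_{b,c}(\varepsilon_t^{(b)})^\top H'_{bc}\varepsilon_t^{(c)}$. For $b\neq c$ the masks are independent with zero-mean factors, so these cross terms vanish in expectation. For $b=c$ we have $\mathbb{E}[(m^{(b)}/p-1)^2]=\mathrm{Var}(m^{(b)})/p^2=(1-p)/p$. The expectation is therefore $\frac{1-p}{2p}\sum_b(\Delta_t^{(b)})^\top H'_{bb}\Delta_t^{(b)}$.

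\textbf{Changing the Hessian's base point.} To match $R(\Delta_t)$ with $H$ evaluated at $\theta_t$ or at $\theta_t-\Delta_t$, note that the two Hessians differ by $O(\|\Delta_t\|)$ under a Lipschitz Hessian assumption. Replacing one by the other costs $O(\|\Delta_t\|^3)$, which is absorbed into the remainder. This is why the statement does not need to commit to which base point $H$ refers to.

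\textbf{Main obstacle: the remainder.} We need to bound $\mathbb{E}[O(\|\varepsilon_t\|^3)]$ by $O(\|\Delta_t\|^3)$. Since $\|\varepsilon_t^{(b)}\|\le\max(1,1/p-1)\|\Delta_t^{(b)}\|$ deterministically, we get $\|\varepsilon_t\|\le C_p\|\Delta_t\|$ with $C_p$ depending only on $p$. Under bounded third derivatives (or a Lipschitz Hessian) near the trajectory, the remainder is $O(C_p^3\|\Delta_t\|^3)$, with constants depending on $p$. We would state explicitly that $p$ is treated as fixed and $l$ is assumed $C^3$ (or has a Lipschitz Hessian) in a neighborhood containing all $2^B$ possible masked iterates. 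Apart from this, the argument is the standard variance computation from the centralized masking analysis.
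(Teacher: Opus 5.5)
Your proof is correct and is essentially the argument the paper relies on. The paper gives no standalone proof of this proposition and restates it from the centralized masking literature. In the proof of Proposition~\ref{prop:ind} it invokes exactly this expansion: unbiasedness removes the linear term, the excess is $\frac{1}{2}\mathrm{tr}(H\,\mathrm{Cov}(\tilde{\Delta}_t))$, and independence across blocks removes the off-diagonal Hessian blocks. You expand around $\theta_t-\Delta_t$ instead of $\theta_t$, and you explicitly handle the Hessian's base point and the $p$-dependent remainder constant. These only make precise what the paper leaves implicit.
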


\noindent Since the masked update is unbiased, the second term originates entirely in its variance. The quadratic form ${(\Delta_t^{(b)})}^\top H_{bb}\Delta_t^{(b)}$ measures how sharply the loss curves along the update direction within block $b$. It becomes large when the step moves into a narrow valley, small when it moves along a flat one. Only the diagonal blocks of the Hessian appear, since independent masks across blocks cancel the cross terms in expectation. Masking therefore charges the optimizer for moving in sharp directions, and the coefficient $\frac{1-p}{2p}$ sets the price, which vanishes as $p\to 1$ and grows as more of the update is discarded.

\subsection{Independent Masks}

At the beginning, the natural attempt is to let each client draw its own mask.

\begin{proposition}
\label{prop:ind}
If the masks are drawn independently across clients as well as blocks, the curvature penalty at the aggregated model, denoted $R_{ind}$, is

\begin{equation}
    R_{ind}=\frac{1}{{K_c}^2} \sum_{k\in C}R(\Delta_k).
    \label{eq:prop2}
\end{equation}

\end{proposition}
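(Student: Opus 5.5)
The plan is to repeat the second-order expansion behind Proposition~\ref{prop:mask-curvature}, this time for the aggregated masked update instead of a single worker's update. Under independent masking, client $k$ sends $\tilde{\Delta}_k^{(b)}=\frac{1}{p}m_k^{(b)}\Delta_k^{(b)}$, where the $m_k^{(b)}\sim Bernoulli(p)$ are independent across $k$ and $b$. The server then forms $\tilde{\bar{\Delta}}=\frac{1}{K_c}\sum_{k\in C}\tilde{\Delta}_k$. We condition on the round state: the broadcast $\theta$ and the unmasked local updates $\Delta_k$. Since each $\tilde{\Delta}_k$ is unbiased, $\mathbb{E}[\tilde{\bar{\Delta}}]=\bar{\Delta}$. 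The penalty $R_{ind}$ is defined as the second-order gap $\mathbb{E}[l(\theta-\tilde{\bar{\Delta}})]-l(\theta-\bar{\Delta})$, up to $O(\|\cdot\|^3)$, exactly as in Proposition~\ref{prop:mask-curvature}.

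\textbf{Step 1 (expansion).} Write $\tilde{\bar{\Delta}}=\bar{\Delta}+\xi$, where $\xi=\frac{1}{K_c}\sum_k(\tilde{\Delta}_k-\Delta_k)$ has mean zero. Taylor-expand $l$ around $\theta-\bar{\Delta}$. The first-order term $-\nabla l^\top\xi$ vanishes in expectation. The quadratic term gives $\frac12\mathbb{E}[\xi^\top H\xi]$, with $H$ the Hessian at the evaluation point. Replacing the Hessian at $\theta-\bar\Delta$ by the Hessian at $\theta$ changes things only by $O(\|\Delta\|^3)$, which is absorbed in the remainder.

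\textbf{Step 2 (covariance).} Compute $\mathbb{E}[\xi^\top H\xi]=\sum_{b,b'}\mathbb{E}[(\xi^{(b)})^\top H_{bb'}\xi^{(b')}]$. Here $\xi^{(b)}=\frac{1}{K_c}\sum_k(\frac{m_k^{(b)}}{p}-1)\Delta_k^{(b)}$, and the scalar factors $\varepsilon_k^{(b)}=\frac{m_k^{(b)}}{p}-1$ are independent, mean zero, with variance $\frac{1-p}{p}$. The cross moment is $\mathbb{E}[\varepsilon_k^{(b)}\varepsilon_{j}^{(b')}]=\frac{1-p}{p}\,\mathbb{1}[k=j,\,b=b']$. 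Two kinds of terms therefore drop out: off-diagonal Hessian blocks (independence across blocks, as in Proposition~\ref{prop:mask-curvature}) and all cross-client terms (the new independence). This leaves
\begin{equation*}
\tfrac12\mathbb{E}[\xi^\top H\xi]=\frac{1}{K_c^2}\cdot\frac{1-p}{2p}\sum_{k\in C}\sum_{b}(\Delta_k^{(b)})^\top H_{bb}\Delta_k^{(b)}=\frac{1}{K_c^2}\sum_{k\in C}R(\Delta_k),
\end{equation*}
which is \eqref{eq:prop2}.

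\textbf{Main obstacle.} The algebra is routine, so the delicate point is the setup. We need to say precisely that $H$ is the global Hessian $\bar{H}$ at the aggregated point, and that the remainder is $O(\max_k\|\Delta_k\|^3)$. This requires a bounded third derivative of $l$ along with $\|\xi\|\le\frac{1}{pK_c}\sum_k\|\Delta_k\|$. We also have to stress that the $\Delta_k$ are treated as fixed: the mask is applied to round-level updates after local training, so there is no coupling between the mask and the local trajectories. Once this is in place, the contrast the paper wants is immediate. If each $\Delta_k\approx\bar\Delta$, the sum is roughly $K_cR(\bar\Delta)$, so $R_{ind}\approx R(\bar\Delta)/K_c$. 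That is the claimed attenuation by exactly the cohort size.
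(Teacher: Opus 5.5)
Your proposal is correct and is essentially the paper's proof: the paper also expands to second order, uses unbiasedness so that the excess is $\frac{1}{2}\mathrm{tr}\bigl(\bar{H}\,\mathrm{Cov}(\tilde{\bar{\Delta}})\bigr)$, and uses independence across clients to get $\mathrm{Cov}(\tilde{\bar{\Delta}})=\frac{1}{K_c^2}\sum_k\mathrm{Cov}(\tilde{\Delta}_k)$, which is exactly what your cross-moment identity $\mathbb{E}[\varepsilon_k^{(b)}\varepsilon_j^{(b')}]=\frac{1-p}{p}\delta_{kj}\delta_{bb'}$ encodes. You simply write out the block and client bookkeeping and the $O(\|\Delta\|^3)$ remainder explicitly, which the paper leaves implicit.
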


\begin{IEEEproof}
By the same expansion as in Proposition \ref{prop:mask-curvature}, and since masking leaves the mean update intact, the excess over the unmasked step is $\frac{1}{2}tr(\bar{H}Cov(\tilde{\bar{\Delta}}))$. Independence across clients gives $Cov(\tilde{\bar{\Delta}})=\frac{1}{{K_c}^2}\sum_{k}Cov(\tilde{\Delta}_k)$, and each term contributes $R(\Delta_k)$.
\end{IEEEproof}

Aggregation is linear in the updates but the penalty is quadratic, so the averaging coefficient enters squared while the sum contributes only $K_c$ terms. The result is the cohort average of the single-worker penalty, attenuated by a further factor $K_c$. Each client still incurs the full penalty along its own trajectory, only the penalty at the aggregated model is diluted.

\subsection{Synchronized Masks}

Dilution occurs because of the masks averaging out. Broadcasting a single mask seed per round, so that every client applies the same block mask, removes the necessity to average.

\begin{proposition}
\label{prop:shr}

If a single mask is drawn per round and then shared with all clients, the corresponding curvature penalty $R_{shr}$ is

\begin{equation}
    R_{shr}=R(\bar{\Delta}).
    \label{eq:prop3}
\end{equation}

\end{proposition}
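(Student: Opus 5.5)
The plan mirrors the proof of Proposition \ref{prop:ind}, with the independence structure replaced by the opposite extreme. With a shared mask $m=(m^{(1)},\dots,m^{(B)})$, $m^{(b)}\sim Bernoulli(p)$ drawn once per round, client $k$ sends $\tilde{\Delta}_k^{(b)}=\frac{1}{p}m^{(b)}\Delta_k^{(b)}$. First I would use linearity of the server average \eqref{eq1} to get, blockwise,
\begin{equation}
\tilde{\bar{\Delta}}^{(b)}=\frac{1}{K_c}\sum_{k\in C}\frac{1}{p}m^{(b)}\Delta_k^{(b)}=\frac{1}{p}m^{(b)}\bar{\Delta}^{(b)}.
\end{equation}
So the aggregated update is exactly the update $\bar{\Delta}$ masked as in \eqref{eq2}. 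The masking and averaging operators commute because a common scalar $m^{(b)}/p$ multiplies every client's block $b$.

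Once this identity holds, the result follows from Proposition \ref{prop:mask-curvature} applied at the global model with $\Delta_t=\bar{\Delta}$ and the Hessian $\bar{H}$ of the global objective. To make the argument parallel to Proposition \ref{prop:ind}, I would instead compute the covariance directly and take the excess as $\frac{1}{2}tr(\bar{H}\,Cov(\tilde{\bar{\Delta}}))$. The blocks are independent, so the covariance is block diagonal. Its block $b$ equals $Var(m^{(b)}/p)\,\bar{\Delta}^{(b)}(\bar{\Delta}^{(b)})^\top=\frac{1-p}{p}\bar{\Delta}^{(b)}(\bar{\Delta}^{(b)})^\top$. Taking the trace against $\bar{H}$ then keeps only the diagonal blocks $\bar H_{bb}$, which gives $\frac{1-p}{2p}\sum_b(\bar{\Delta}^{(b)})^\top\bar H_{bb}\bar{\Delta}^{(b)}=R(\bar{\Delta})$. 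The mean is unchanged by unbiasedness, so the first-order terms match those of the unmasked aggregated step. The remainder is $O(\|\bar{\Delta}\|^3)$, which requires a bounded third derivative, as in Proposition \ref{prop:mask-curvature}.

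No single step is hard. The point needing the most care is the conditioning: the $\Delta_k$ must be treated as fixed given the round's starting state, so that the mask is independent of the client updates. This holds because the mask only acts on the round-level update after local training; if the mask influenced local training, the argument would fail. I would state this assumption explicitly. I would also note that $R$ here is evaluated with $\bar H$, which matches the convention of Proposition \ref{prop:ind}. Finally, I would point out that, unlike the $K_c^{-2}\sum_k R(\Delta_k)$ of Proposition \ref{prop:ind}, the cross terms $\langle\Delta_i,\Delta_j\rangle_{\bar H_{bb}}$ now survive. This is what later links the ratio $R_{shr}/R_{ind}$ to $1/\Delta_S$.
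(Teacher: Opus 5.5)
Your proposal is correct and follows the paper's own proof. Both rest on the identity $\tilde{\bar{\Delta}}^{(b)}=\frac{1}{p}m^{(b)}\bar{\Delta}^{(b)}$ obtained from linearity of the average, after which the aggregated update inherits the covariance of a single masked update and Proposition \ref{prop:mask-curvature} applies with $\bar{\Delta}$ in place of $\Delta$. Your explicit block-covariance computation and your stated assumption that the mask is independent of the client updates spell out steps the paper leaves implicit.
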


\begin{IEEEproof}
    A shared mask gives $\tilde{\bar{\Delta}}^{(b)}=\frac{1}{p}m^{(b)}\bar{\Delta}^{(b)}$, therefore the aggregated inherits the covariance of a single masked update, with $\bar{\Delta}$ in place of $\Delta$.
\end{IEEEproof}

The penalty returns at full strength, and it now acts on the aggregated direction rather than on each client's local trajectory, which is the one that actually needs to be regularized. Whether it is worth anything depends on how large $\bar{\Delta}$ is relative to the individual $\Delta_k$.

\subsection{Recovery Factor}

Propositions \ref{prop:ind} and \ref{prop:shr} differ only in the object that the penalty is applied to. Taking their ratio, and treating $\bar{H}_{bb}$ as isotropic within blocks so that the quadratic forms reduce to squared norms,

\begin{equation}
    \frac{R_{shr}}{R_{ind}}=\frac{\left\|\bar{\Delta}\right\|^2}{\frac{1}{{K_c}^2} {\textstyle\sum_{k}}\left\|\Delta_k\right\|^2}=\frac{1}{\Delta _S}.
    \label{eq8}
\end{equation}

\begin{corollary}
    \label{coro:recovery}
    Synchronizing the mask recovers the curvature penalty by a factor of $1/\Delta_S$, the reciprocal gradient diversity of the cohort.
\end{corollary}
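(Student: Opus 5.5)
The plan is to obtain the corollary as the ratio of Proposition \ref{prop:shr} to Proposition \ref{prop:ind}, under the isotropy assumption already used in \eqref{eq8}. The steps run as follows.

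\textbf{Step 1: write both penalties as quadratic forms.} By \eqref{eq4}, applying \eqref{eq:prop2} and \eqref{eq:prop3} gives
\begin{align*}
R_{ind} &= \frac{1-p}{2p}\cdot\frac{1}{K_c^2}\sum_{k\in C}\sum_{b}(\Delta_k^{(b)})^\top \bar{H}_{bb}\Delta_k^{(b)},\\
R_{shr} &= \frac{1-p}{2p}\sum_{b}(\bar{\Delta}^{(b)})^\top \bar{H}_{bb}\bar{\Delta}^{(b)}.
\end{align*}
Both are evaluated at the same aggregation point, so the same $\bar{H}$ appears in each. The prefactor $\frac{1-p}{2p}$ is common to the two expressions and cancels in the ratio.

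\textbf{Step 2: reduce the quadratic forms to squared norms.} I impose within-block isotropy, $\bar{H}_{bb}=h\,I$, with one common curvature $h>0$ across all blocks. Then every quadratic form collapses to $h\|\cdot\|^2$, and summing over blocks gives
\begin{align*}
R_{shr} &= \frac{1-p}{2p}\,h\,\|\bar{\Delta}\|^2,\\
R_{ind} &= \frac{1-p}{2p}\,\frac{h}{K_c^2}\sum_k\|\Delta_k\|^2 .
\end{align*}

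\textbf{Step 3: take the ratio and match it to \eqref{eq3}.} Substituting $\bar{\Delta}=\frac{1}{K_c}\sum_k\Delta_k$ from \eqref{eq1}, the factors $1/K_c^2$ cancel and
\[
\frac{R_{shr}}{R_{ind}}=\frac{\|\sum_k\Delta_k\|^2}{\sum_k\|\Delta_k\|^2}=\frac{1}{\Delta_S}.
\]
This is \eqref{eq8}. The Cauchy--Schwarz remark in the preliminaries then places this factor in $(0,K_c]$, and the factor falls below one exactly when the pairwise inner products sum to a negative number.

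\textbf{Main obstacle: the isotropy assumption.} If each block has its own curvature $h_b$, the ratio becomes a $h_b$-weighted analogue of the reciprocal diversity rather than exactly $1/\Delta_S$. The precise way to state the result is then to define the block-weighted diversity
\[
\Delta_S^{H}=\frac{\sum_k\sum_b h_b\|\Delta_k^{(b)}\|^2}{\sum_b h_b\|\sum_k\Delta_k^{(b)}\|^2}.
\]
With this definition the identity $R_{shr}/R_{ind}=1/\Delta_S^{H}$ holds exactly by the computation above. The corollary's unweighted $1/\Delta_S$ is the special case of equal $h_b$. For a general, non-isotropic $\bar{H}_{bb}$ the same holds with $\bar{H}_{bb}$-weighted inner products in place of Euclidean ones. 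The Cauchy--Schwarz bound still applies in that weighted inner product, provided the $\bar{H}_{bb}$ are positive semidefinite.
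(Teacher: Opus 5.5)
Your proposal is correct and follows the paper's route: take the ratio of Proposition~\ref{prop:shr} to Proposition~\ref{prop:ind}, cancel the common prefactor $\frac{1-p}{2p}$, and use isotropy of $\bar{H}_{bb}$ to reduce the quadratic forms to squared norms, which gives \eqref{eq8}. Your requirement of a single curvature $h$ shared by all blocks usefully sharpens the paper's ``isotropic within blocks''; with block-dependent $h_b$ one only obtains the weighted diversity $\Delta_S^{H}$, which matches the paper's own remark that the corollary is exact only in the $\bar{H}$-weighted norm.
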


Here we find that the fraction of the masking regularization that survives federated averaging is governed by a quantity that is already measured in the aggregation of federated optimization. It is available at the server from the updates it receives. The only additional work is $K_c$ squared norms of $\Delta_k$ with no communication cost. By the previous content, synchronization recovers at most a factor $K_c$, and recovers nothing once the cohort's updates are mutually orthogonal. In the cancellation regime, the penalty is smaller than the independent one when $1/\Delta_S <1$. That means it does even worse than doing nothing. Therefore broadcasting such a shared mask is strictly counterproductive.

\subsection{Signal-Noise Decomposition}

Corollary \ref{coro:recovery} explains how much of the regularization survives, but offers no insight into the factors that determine the observed particular values. Here we divide each client's gradient update into a useful component that all clients agree on and a client-specific noise

\begin{equation}
    \Delta_k=\mu +\varepsilon_k,
    \label{eq9}
\end{equation}

\noindent where $\mu \triangleq \mathbb{E}[\Delta_k]$, so that $\varepsilon_k$ has zero mean by construction. We assume the $\varepsilon_k$ are independent across clients with common variance $\sigma^2=\mathbb{E}\left\|\varepsilon_k\right\|^2$. Then we substitute it into the two terms of $\Delta_S$ and drop the cross terms, which vanish in expectation,

\begin{equation}
    \sum_k\left\|\Delta_k\right\|^2\approx K_c(\left\|\mu\right\|^2+\sigma^2),
    \label{eq10}
\end{equation}

\begin{equation}
    \left\|\sum_{k}\Delta_k\right\|^2=\left\| K_c\mu + \sum_k\varepsilon _k\right\|^2\approx{K_c}^2\left\|\mu\right\|^2+K_c\sigma^2.
    \label{eq11}
\end{equation}

\noindent The shared component adds coherently across the cohort and grows as ${K_c}^2$ while the noise adds incoherently and grows only as $K_c$. Then we set $r=\frac{\left\|\mu\right\|^2}{\sigma^2}$,

\begin{equation}
    \frac{1}{\Delta_S}=\frac{K_c\left\|\mu\right\|^2+\sigma^2}{\left\|\mu\right\|^2+\sigma^2}=\frac{K_cr+1}{r+1},
    \label{eq12}
\end{equation}

\noindent which recovers $K_c$ as $r\to\infty$ and approaches $1$ as $r\to0$. Finally, the recovery factor is a signal-to-noise ratio in disguise, and the gap between coherent and incoherent addition is its entire content.

Meanwhile, both components are identifiable from statistics the server already forms. Let $A=\frac{1}{K_c}\textstyle{\sum_k}\left\|\Delta_k\right\|^2$ and $B=\left\|\bar{\Delta}\right\|^2$. The averaging of federated aggregation suppresses $\varepsilon_k$ but not $\mu$, so $A \approx \left\|\mu\right\|^2+\sigma^2$ while $B \approx \left\|\mu\right\|^2+\sigma^2/K_c$. $\sigma^2$ and $\left\|\mu\right\|^2$ can be separated by subtraction:

\begin{equation}
    \hat{\sigma}^2=\frac{A-B}{1-1/K_c},
    \label{eq13}
\end{equation}

\begin{equation}
    \left\|\hat{\mu}\right\|^2=A-\hat{\sigma}^2.
    \label{eq14}
\end{equation}

The reparameterization carries no information beyond $\Delta_S$ itself. But the two estimators separate a low recovery factor caused by a weak shared direction from one caused by large client noise. The estimators assume the noise is independent across clients with comparable magnitude, which is reasonable when clients hold similar amounts of data.

\section{Federated Measurement}

\subsection{Setup}

We train ResNet-18 with GroupNorm on CIFAR-10 \cite{krizhevsky2009cifar}, partitioned across $K=100$ clients by a per-class Dirichlet distribution with concentration $\alpha$. Each round the server samples a cohort of $K_c=10$ clients uniformly, each round the selected clients run $\tau$ local epochs of SGD at learning rate $0.01$ with batch size $bs$. The server applies the $n_k$-weighted mean of the updates where $n_k$ denotes the quantity of samples stored in client $k$. All runs are 200 rounds with three seeds, and we report medians over the last 50 rounds.

We vary $bs\in\left\{32,128,full\right\}$ and $\tau\in\left\{1,2,5\right\}$, where $full$ means a single gradient step over the client's entire shard, and separately $\alpha\in\left\{100,1,0.1\right\}$. GroupNorm replaces BatchNorm because batch statistics are a known confound under non-IID partitioning. Since $\bar{H}$ is not available, $\Delta_S$ is computed in the Euclidean norm rather than the $\bar{H}$-weighted one.

Correctness is established by a control in which all clients receive identical shards and identical seeds, so that every $\Delta_k$ coincides and $1/\Delta_S=K_c$ to machine precision.

We report $\Delta_S$ under the uniform convention before, and give the $n_k$-weighted value alongside it in this section. The two agree to within $0.04\%$ at $\alpha=100$, where shard sizes vary by under $3\%$. They diverge as the partition becomes unbalanced, we report the gap at $\alpha=0.1$.

\subsection{Effect of Local Training Configuration}

Two quantities distinguish the configurations in Table \ref{tab:local-steps}. The local batch size controls whether a client's gradient is exact or sampled. The number of local steps per round denoted as $T$, controls how far a client travels before reporting, which equals $\tau$ times the number of minibatches per epoch for a client holding $n_k$ samples. With about $500$ samples per client, the grid spans $T=1$ to $T=80$, from a single full-batch step at one extreme to eighty minibatch steps at the other. The two axes are not independent, since halving the batch size also doubles both the sampling noise per step and $T$, and separating their effects is the purpose of this sweep.

The pattern that emerges is not a gradient but a cliff, and it does not fall where we would expect. The three full-batch configurations give $8.96$, $4.13$, $3.96$. Every configuration using a minibatch ($32$ or $128$) lies between $1.10$ and $1.36$. The gap spans a factor of nearly three, and nothing occupies it.

A natural explanation is that clients drift apart the longer they train alone, which would make the recovery factor fall with $T$. However, it does not hold across the following two groups: five steps at full batch gives $3.96$, while four steps at a batch of $128$ gives $1.36$. What distinguishes them is not the number of steps but whether those steps use exact local gradients or sampled ones. The two also reach almost the same accuracy, $0.493$ and $0.449$, so the gap is not a matter of how far training has progressed.

It is obvious that within each group the ordering is monotone in $T$, so the number of local steps is the thing that does matter. But it simply matters far less than whether the gradients are stochastic. Raising $\tau$ from $1$ to $5$ moves the recovery factor from $1.24$ to $1.10$ at a batch of $32$, and from $8.96$ to $3.96$ at full size batch. The mechanism is most sensitive where the noise is smallest.

Corollary \ref{coro:recovery} allows $1/\Delta_S$ to fall below one, in which case a synchronized mask does worse than independent masks. It happens at a batch of $128$ with $\tau=1$ where $4.5\%$ of rounds fall in that regime. The fraction declines as the noise grows and reaches zero in the three noisiest cells. It is never large, but it shows the regime is real rather than a formal possibility. The two aggregation conventions agree to within $1\%$ in every cell, as expected at $\alpha=100$ where shard sizes vary little.

\begin{table}[t]
  \centering
  \caption{FedAvg on a near-IID split ($\alpha=100$), $K_c=10$ of 100 clients per round, 200 rounds, 3 seeds, sweeping local batch size $bs$ and epochs $\tau$ over $T=\lceil \bar{n}_k/bs\rceil\,\tau$ local steps per round ($\bar{n}_k\approx500$). Entries are medians across seeds of each run's median over the last 50 rounds; $K_c=10$ is the ceiling on $1/\Delta_S$.}
  \label{tab:local-steps}
  \footnotesize
  \setlength{\tabcolsep}{4pt}
  \begin{tabular}{@{}lrrrrrrrr@{}}
    \toprule
    & & & \multicolumn{2}{c}{$1/\Delta_S$} & & & & \\
    \cmidrule(lr){4-5}
    $bs$ & $\tau$ & $T$ & unwtd. & $n_k$-wtd. & $\hat{\mu}^2$ & $\hat{\sigma}^2$ & acc & $\substack{\text{rds.}\\<1\,\%}$ \\
    \midrule
    full & 1 & 1 & 8.96 & 8.95 & $3.6\times10^{-3}$ & $4.9\times10^{-4}$ & 0.314 & 0.0 \\
    full & 2 & 2 & 4.13 & 4.13 & $6.2\times10^{-4}$ & $1.2\times10^{-3}$ & 0.383 & 0.0 \\
    full & 5 & 5 & 3.96 & 3.96 & $3.7\times10^{-3}$ & $7.6\times10^{-3}$ & 0.493 & 0.0 \\
    \midrule
    128 & 1 & 4 & 1.36 & 1.35 & $8.4\times10^{-4}$ & $1.9\times10^{-2}$ & 0.449 & 4.5 \\
    128 & 2 & 8 & 1.32 & 1.31 & $1.2\times10^{-3}$ & $3.4\times10^{-2}$ & 0.534 & 1.7 \\
    32 & 1 & 16 & 1.24 & 1.24 & $2.1\times10^{-3}$ & $7.4\times10^{-2}$ & 0.553 & 1.0 \\
    128 & 5 & 20 & 1.18 & 1.18 & $2.8\times10^{-3}$ & $1.2\times10^{-1}$ & 0.683 & 0.0 \\
    32 & 2 & 32 & 1.19 & 1.19 & $4.1\times10^{-3}$ & $1.9\times10^{-1}$ & 0.680 & 0.0 \\
    32 & 5 & 80 & 1.10 & 1.10 & $7.3\times10^{-3}$ & $6.0\times10^{-1}$ & 0.814 & 0.0 \\
    \bottomrule
  \end{tabular}
\end{table}

\subsection{Decomposition of the Recovery Factor}

Splitting each client's update into the part the cohort agrees on and the part it does not lets us ask which of the two is responsible. Across the nine configurations the noise variance spans three orders of magnitude, from $4.9\times10^{-4}$ at full batch with $\tau=1$ to $6.0\times10^{-1}$ at a batch size of $32$ with $\tau=5$. The shared component spans far less from $6.2\times10^{-4}$ to $7.3\times10^{-3}$, and it moves in the opposite direction. Clients that take more local steps travel further along the shared direction as well as accumulating more noise. The recovery factor falls because the noise outgrows the signal by over two orders of magnitude, instead of the signal disappearing.

Configurations reach different points in training after $200$ rounds, with test accuracy ranging from $0.314$ to $0.814$, so a direct comparison risks confounding sampling noise with training progress. The grid contains a case that separates the two. The full batch run with $\tau=5$ reaches $0.493$ accuracy, between the $0.449$ from the batch size of $128$ with $\tau=1$ and the $0.534$ of the same batch with $\tau=2$. Its noise variance is lower than both by more than a factor of two, and its recovery factor is three times higher. Training progress does not place it between its neighbors. It is the sampling regime that places it apart from them.

\subsection{Trade-off with Model Accuracy}

Recovering the regularization is possible, but the configurations that recover it are not ones anybody would train with. Moving from a batch size of $32$ with $\tau=2$ to full batch with $\tau=5$ raises the recovery factor from $1.19$ to $3.96$, a gain of $3.3$ times but still reaches only $40\%$ of the ceiling $K_c$, and costs $0.187$ in test accuracy from $0.680$ to $0.493$. Pushing further to full batch with $\tau=1$, recovering buys $8.96$ at an accuracy of $0.314$.

\begin{figure}[t]
  \centering
  \includegraphics[width=\columnwidth]{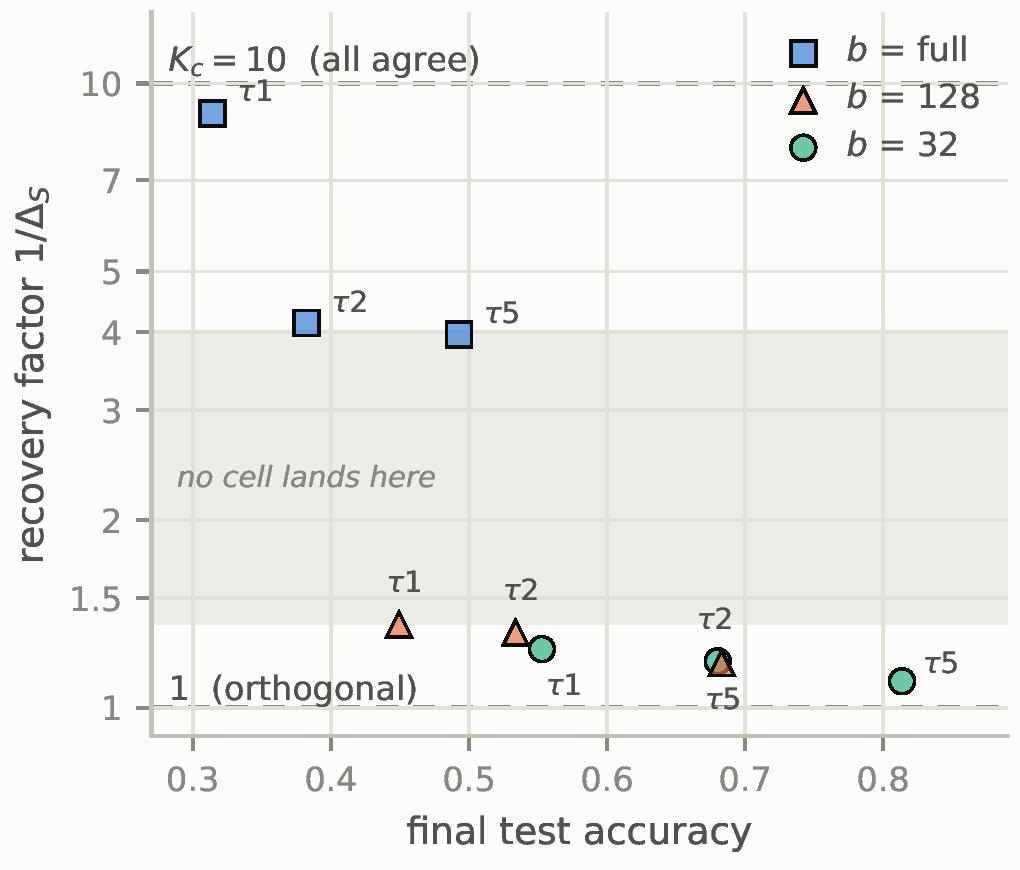}
  \caption{Recovery factor against final test accuracy for the nine configurations of Table~\ref{tab:local-steps}. Marker shape denotes local batch size, annotations denote $\tau$.}
  \label{fig:tradeoff}
\end{figure}

Fig.\ref{fig:tradeoff} plots the two quantities against each other. Only one cell is dominated, whose batch size is 128 with $\tau=1$. But the gap in the middle means there is nothing to trade along. Moving toward better recovery costs a large amount of accuracy, and even then most of the regularization is still gone.

\subsection{Effect of Data Heterogeneity}

Since the motivation is non-IID data, it is worth asking whether heterogeneity is what holds the recovery factor down. Table \ref{tab:alpha} varies the Dirichlet concentration with the local protocol fixed. As is shown in the unweighted column, $\alpha=100$ and $\alpha=1$ are indistinguishable and $\alpha=0.1$ rises to $1.50$. Under the $n_k$-weighted convention all three are flat at about $1.18$. The disagreement appears only at $\alpha=0.1$, where a few clients hold a hundred times more data than others, and it is the same in every seed, so it reflects the partition rather than run-to-run variation. The $\alpha=0.1$ runs also reach lower accuracy, so part of the difference reflects a different point in training rather than heterogeneity as such.

Neither changes the conclusion. Over three orders of magnitude in $\alpha$ the recovery factor moves between $1.17$ and $1.50$, while changing the local training protocol at a single $\alpha$ moves it from $1.19$ to $8.96$. Heterogeneity is not the main reason that holds it down.

\begin{table}[t]
  \centering
  \caption{Varying the Dirichlet concentration $\alpha$ with the local protocol fixed at $bs=32$, $\tau=2$. Shard sizes are far more uneven at $\alpha=0.1$, so the two aggregation conventions differ there.}
  \label{tab:alpha}
  \footnotesize
  \setlength{\tabcolsep}{2.5pt}
  \begin{tabular}{@{}lrrrrrr@{}}
    \toprule
    & \multicolumn{2}{c}{$1/\Delta_S$} & & & & \\
    \cmidrule(lr){2-3}
    $\alpha$ & unwtd. & $n_k$-wtd. & $\hat{\mu}^2$ & $\hat{\sigma}^2$ & acc & $\substack{\text{rds.}\\<1\,\%}$ \\
    \midrule
    100 & 1.19 & 1.19 & $4.1\times10^{-3}$ & $1.9\times10^{-1}$ & 0.680 & 0.0 \\
    1 & 1.19 & 1.17 & $3.9\times10^{-3}$ & $1.7\times10^{-1}$ & 0.626 & 0.2 \\
    0.1 & 1.50 & 1.18 & $6.5\times10^{-3}$ & $1.1\times10^{-1}$ & 0.439 & 3.0 \\
    \bottomrule
  \end{tabular}
\end{table}

\subsection{Discussion and Limitations}
Three limitations bound what these measurements support. The analysis rests on a second-order expansion that assumes small updates, which multi-step minibatch training violates. Those are precisely the configurations where the recovery factor sits near one, so the penalty there should be read as indicative rather than exact. We measure $\Delta_S$ in the Euclidean norm because the global Hessian is unavailable at the server, and Corollary \ref{coro:recovery} is exact only under the $\bar{H}$-weighted norm. And we measure how much of the penalty survives aggregation, not what it is worth. A recovery factor of $1.19$ does not by itself say masking would have helped had it survived. All results are for one architecture and one dataset.

The attenuation of Proposition \ref{prop:ind} uses only that the perturbation is zero-mean and independent across clients, so it applies equally to update-level dropout and to additive gradient noise. Perturbations that shift the mean rather than the variance, as sharpness-aware minimization does, pass through averaging undiminished, which may be why the flatness-seeking methods successfully transferred to federated learning have been of that kind.

\section{Conclusion}

We asked how much of the curvature penalty induced by update masking survives FedAvg. With independent per-client masks it is attenuated by the cohort size. Synchronizing the masks restores it by the reciprocal gradient diversity of the cohort, a factor bounded above by the cohort size but not below by one. On CIFAR-10 it sits at $1.19$ against a ceiling of $10$, and what holds it there is minibatch sampling noise rather than data heterogeneity. The configurations that recover it are not ones anybody would train with.

\bibliographystyle{IEEEtran}
\bibliography{refs}

\end{document}